\documentclass[conference,a4paper]{IEEEtran}
\IEEEoverridecommandlockouts

\usepackage{amsmath, amssymb, amsfonts}
\usepackage{booktabs}
\usepackage{multirow}
\usepackage{array}
\usepackage{graphicx}
\usepackage{xcolor}
\usepackage{url}
\usepackage{algorithm}
\usepackage{algpseudocode}
\usepackage{tikz}
\usepackage{placeins}      

\DeclareFontShape{OT1}{ptm}{m}{scit}{<->ssub * ptm/m/sc}{}
\DeclareFontShape{OT1}{ptm}{b}{scit}{<->ssub * ptm/b/sc}{}
\usetikzlibrary{arrows.meta, positioning, shapes.geometric, fit, calc, patterns, decorations.markings, backgrounds}
\usepackage{pgfplots}
\pgfplotsset{compat=1.18}
\usepackage{balance}
\usepackage{amsthm}
\newtheorem{theorem}{Theorem}
\newtheorem{proposition}{Proposition}

\definecolor{agentblue}{RGB}{51,102,187}
\definecolor{camcoorange}{RGB}{230,126,34}
\definecolor{infragreen}{RGB}{39,174,96}
\definecolor{policyred}{RGB}{192,57,43}
\definecolor{auditpurple}{RGB}{142,68,173}
\definecolor{lightblue}{RGB}{214,234,248}
\definecolor{lightorange}{RGB}{253,235,208}
\definecolor{lightgreen}{RGB}{213,245,227}
\definecolor{lightpurple}{RGB}{235,222,240}
\definecolor{feasibleblue}{RGB}{174,214,241}

\newcommand{\camco}{\textsc{CAMCO}}
\newcommand{\phip}{\Phi}
\newcommand{\risk}{\mathcal{R}}
\newcommand{\feasible}{\mathcal{F}}
\begin{document}

\title{Safe and Policy-Compliant Multi-Agent Orchestration\\
for Enterprise AI}

\makeatletter
\newcommand{\linebreakand}{%
  \end{@IEEEauthorhalign}
  \hfill\mbox{}\\* \mbox{}\hfill
  \begin{@IEEEauthorhalign}
}
\makeatother

\author{
\IEEEauthorblockN{1\textsuperscript{st} Vinil Pasupuleti}
\IEEEauthorblockA{\textit{International Business Machines (IBM)} \\ South Carolina, United States \\ IEEE Senior Member}
\and
\IEEEauthorblockN{2\textsuperscript{nd} Shyalendar Reddy Allala}
\IEEEauthorblockA{\textit{Global Atlantic Financial} \\ Indiana, United States \\ Independent Researcher}
\linebreakand
\IEEEauthorblockN{3\textsuperscript{rd} Siva Rama Krishna Varma Bayyavarapu}
\IEEEauthorblockA{\textit{Docusign} \\ Indiana, United States \\ IEEE Senior Member}
\and
\IEEEauthorblockN{4\textsuperscript{th} Shrey Tyagi}
\IEEEauthorblockA{\textit{Salesforce Inc} \\ North Carolina, United States \\ Independent Researcher}
}

\maketitle

\begin{abstract}
Enterprise AI systems increasingly deploy multiple intelligent agents across mission-critical workflows that must satisfy hard policy constraints, bounded risk exposure, and comprehensive auditability (SOX, HIPAA, GDPR). Existing coordination methods---cooperative MARL, consensus protocols, and centralized planners---optimize expected reward while treating constraints implicitly. This paper introduces \camco{} (Constraint-Aware Multi-Agent Cognitive Orchestration), a \emph{runtime coordination layer} that models multi-agent decision-making as a constrained optimization problem. \camco{} integrates three mechanisms: (i)~a constraint projection engine enforcing policy-feasible actions via convex projection, (ii)~adaptive risk-weighted Lagrangian utility shaping, and (iii)~an iterative negotiation protocol with provably bounded convergence. Unlike training-time constrained RL, \camco{} operates as deployment-time middleware compatible with any agent architecture, with policy predicates designed for direct integration with production engines such as OPA. Evaluation across three enterprise scenarios---including comparison against a constrained Lagrangian MARL baseline---demonstrates \textbf{zero policy violations}, risk exposure below threshold (mean ratio~$0.71$), \textbf{92--97\%} utility retention, and mean convergence in \textbf{2.4~iterations}.
\end{abstract}

\begin{IEEEkeywords}
Enterprise AI, multi-agent systems, safe reinforcement learning, constrained MDP, risk-aware optimization, orchestration, governance-ready AI, policy compliance
\end{IEEEkeywords}

\section{Introduction}

Enterprise AI-native architectures increasingly deploy multiple intelligent agents---LLM-based assistants~\cite{Xi2023LLMAgentSurvey}, RL policies, and rule-augmented planners---across financial approval, payroll compliance, and cloud orchestration workflows~\cite{ZhangBasar2019, Dafoe2020}. These settings impose hard policy constraints (approval chains, segregation of duties), bounded risk exposure, RBAC-gated execution feasibility, and auditability mandated by SOX, HIPAA, and GDPR~\cite{Seshia2022}.

Existing approaches address constraints during \emph{training} (Safe~RL, CMDPs~\cite{Altman1999, Wachi2024}) or via prompt-based guardrails that lack formal guarantees~\cite{EUAIAct2024}. Enterprise deployments require a \emph{runtime coordination layer} that enforces hard policy feasibility, regulates aggregate risk dynamically, and supports negotiated convergence---capabilities not provided by current frameworks. \camco{} fills this gap as lightweight deployment-time middleware compatible with any agent architecture.

\textbf{Contributions.}
\vspace{-2mm}
\begin{itemize}\setlength{\itemsep}{0pt}\setlength{\parskip}{0pt}
\item A \emph{formal enterprise multi-agent constrained decision model} jointly capturing policy feasibility, execution feasibility, and bounded aggregate risk (Section~\ref{sec:formulation}).
\item \camco{}, a \emph{constraint-aware coordination algorithm} with convex constraint projection and adaptive risk-weighted Lagrangian negotiation with provably bounded convergence (Section~\ref{sec:algorithm}).
\item An \emph{architectural blueprint and evaluation} demonstrating zero-violation coordination with 92--97\% utility retention across three scenarios and four baselines (Sections~\ref{sec:framework},~\ref{sec:evaluation}).
\end{itemize}
\vspace{-2mm}

\section{Related Work}

MARL methods (QMIX~\cite{Rashid2020QMIX}, MAPPO~\cite{Yu2022MAPPO})~\cite{ZhangBasar2019, GronauerDiepold2022, Papoudakis2021} use shaped rewards rather than hard runtime constraints. CMDPs~\cite{Altman1999}, Safe~RL~\cite{Wachi2024}, and CPO~\cite{Achiam2017CPO} formalize constraints during training; negotiation protocols~\cite{Smith1980ContractNet, Rahwan2003ArgumentNegotiation} ensure agreement without utility optimization. LLM-based agents~\cite{Xi2023LLMAgentSurvey, Wang2024LLMAgentSurvey} and frameworks like AutoGen~\cite{Wu2023AutoGen} lack formal constraint enforcement. Rule engines (OPA~\cite{MorgenthalerOPA2020}) and governance frameworks~\cite{Seshia2022, EUAIAct2024} enforce policy but lack optimization. Runtime verification~\cite{LeuckerSchallhart2009} monitors specifications without multi-agent utility optimization. \camco{} uniquely combines utility optimization, hard policy feasibility, bounded risk, and negotiation at runtime.

\textbf{Positioning.} Table~\ref{tab:positioning} summarizes how \camco{} compares to common alternatives across key enterprise requirements.

\begin{table}[t]
\caption{Positioning of \camco{} vs.\ Typical Alternatives}
\label{tab:positioning}
\centering
\scriptsize
\begin{tabular}{lcccc}
\toprule
Approach & Utility & Hard Policy & Bounded & Negotiation \\
         & Opt.    & Feasibility & Risk    & Protocol    \\
\midrule
Unconstrained MARL & \checkmark & $\times$ & $\times$/weak & \checkmark \\
Rule / Workflow Engine & $\times$ & \checkmark & static & $\times$ \\
Safe RL (training) & \checkmark & partial & \checkmark & $\times$/var. \\
LLM Agents (prompt) & \checkmark & partial & $\times$ & ad hoc \\
Consensus Protocols & $\times$ & $\times$ & $\times$ & \checkmark \\
\textbf{\camco{} (ours)} & \checkmark & \checkmark & \checkmark & \checkmark \\
\bottomrule
\end{tabular}
\end{table}

\section{Problem Formulation}
\label{sec:formulation}

\subsection{Enterprise Multi-Agent Decision Model}

Let $G=\{g_1,\dots,g_n\}$ be a set of $n$ enterprise AI agents. At enterprise state $s \in \mathcal{S}$, each agent $g_i$ proposes an action $a_i \in \mathcal{A}_i$. Let $U_i(s,a_i): \mathcal{S} \times \mathcal{A}_i \to \mathbb{R}_{\geq 0}$ denote the utility function of agent $g_i$, and let $\risk_i(a_i): \mathcal{A}_i \to \mathbb{R}_{\geq 0}$ denote its risk contribution. Let $\tau > 0$ denote an enterprise-configured aggregate risk threshold.

We seek a joint action vector $\mathbf{a}=(a_1,\dots,a_n)$ that maximizes aggregate utility subject to three classes of enterprise constraints:

\begin{align}
\max_{\mathbf{a}} \quad & \sum_{i=1}^n U_i(s,a_i) \label{eq:objective}\\
\text{s.t.}\quad & \phip(\mathbf{a}) = 1 \label{eq:policy}\\
& \sum_{i=1}^n \risk_i(a_i) \le \tau \label{eq:risk}\\
& a_i \in \feasible_i(s),\ \forall i \in \{1,\dots,n\} \label{eq:feas}
\end{align}

\subsection{Constraint Definitions}

\textbf{Policy Feasibility Function} $\phip: \mathcal{A}_1 \times \cdots \times \mathcal{A}_n \to \{0,1\}$ encodes \emph{joint} policy validity, capturing approval chains, segregation of duties, and temporal ordering. Formally, $\phip$ evaluates a conjunction of policy predicates $\{\phi_k\}_{k=1}^K$:
\begin{equation}
\phip(\mathbf{a}) = \prod_{k=1}^K \phi_k(\mathbf{a}), \quad \phi_k: \mathcal{A}_1 \times \cdots \times \mathcal{A}_n \to \{0,1\}
\label{eq:policy_decomp}
\end{equation}

\textbf{Execution Feasibility} $\feasible_i(s) \subseteq \mathcal{A}_i$ encodes the set of actions available to agent $g_i$ given current system state $s$, incorporating RBAC permissions, resource availability, and temporal constraints:
\begin{equation}
\feasible_i(s) = \{a \in \mathcal{A}_i : \text{perm}(g_i, a, s) \wedge \text{avail}(a, s) \wedge \text{window}(a, s)\}
\label{eq:feas_def}
\end{equation}

\textbf{Risk Model.} Enterprise risk is nonlinear, state-dependent, and correlated across agents: $\risk(\mathbf{a}, s) = f\bigl(\risk_1(a_1,s), \dots, \risk_n(a_n,s)\bigr)$ where $f$ captures cross-agent interactions. For tractability, we adopt a weighted-additive decomposition:
\begin{equation}
\risk_i(a_i) = \sum_{d \in \mathcal{D}} w_d \cdot r_{i,d}(a_i)
\label{eq:risk_model}
\end{equation}
where $\mathcal{D}$ denotes risk dimensions (financial, compliance, operational, reputational) with organization-configured weights $w_d$.

\subsection{Convergence Requirements}

A coordination algorithm must satisfy: (1)~\textbf{Feasibility guarantee:} the returned $\mathbf{a}^*$ satisfies Eqs.~\eqref{eq:policy}--\eqref{eq:feas}, or the algorithm fails to safe fallback; (2)~\textbf{Bounded runtime:} termination within $K_{\max}$ iterations; (3)~\textbf{Monotonic risk reduction:} $R_{\text{tot}}^{(k+1)} \le R_{\text{tot}}^{(k)}$ when $R_{\text{tot}}^{(k)} > \tau$.

\section{CAMCO: Framework and Architecture}
\label{sec:framework}

\subsection{High-Level Architecture}

Figure~\ref{fig:arch} illustrates how \camco{} integrates within an AI-native enterprise stack via a \emph{perceive--reason--coordinate--execute--reflect} cycle.

\subsection{Core Components}

CAMCO's coordination layer comprises three components: the \textbf{Constraint Projection Engine (CPE)}, the \textbf{Risk-Weighted Utility Engine (RWUE)}, and the \textbf{Negotiation Loop}.

The \textbf{CPE} projects each agent's proposed action onto the nearest policy-compliant alternative. Given a proposed action $a_i$, the CPE solves: $a_i' = \arg\min_{a \in \feasible_i(s)} \|a - a_i\|$ subject to all active policy predicates $\{\phi_k\}$, ensuring the corrected action remains close to the agent's original intent while satisfying all hard constraints.

The \textbf{RWUE} computes adjusted utilities that internalize risk: $\tilde{U}_i = U_i(s, a_i') - \lambda \cdot \risk_i(a_i')$, where $\lambda \geq 0$ is a Lagrangian multiplier updated each round via dual-ascent: $\lambda^{(k+1)} = \lambda^{(k)} + \alpha (R_{\text{tot}}^{(k)} - \tau)^+$.

The \textbf{Negotiation Loop} orchestrates iterative re-proposals when joint actions fail risk or policy checks, tightening $\lambda$ each round until $R_{\text{tot}}$ falls below $\tau$ or $K_{\max}$ is reached. If convergence fails, a \emph{safe fallback policy} reverts all agents to their last known compliant configuration, guaranteeing no high-risk action set executes in production. This design draws on fail-safe principles from industrial control systems, where a well-defined safe state must always be reachable within bounded time. Formally, the fallback operator $\mathcal{F}$ satisfies $\Phi(\mathcal{F}(\mathbf{a})) = 1$ and $\mathcal{R}(\mathcal{F}(\mathbf{a}), s) \leq \tau$ for all states $s$, providing a constructive proof that the orchestration never deadlocks in a non-compliant configuration.

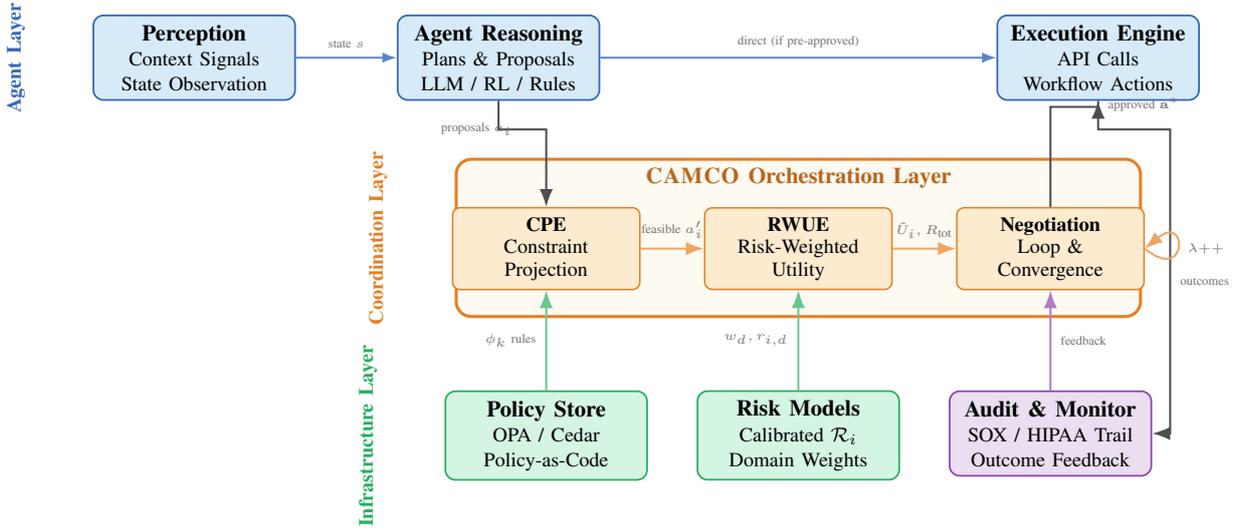
\begin{figure*}[!t]
\centering
\resizebox{0.9\textwidth}{!}{%
\begin{tikzpicture}[
  node distance=10mm and 12mm,
  agentbox/.style={draw, rounded corners=4pt, align=center, minimum width=2.8cm, minimum height=1.1cm, font=\small, fill=lightblue, draw=agentblue, line width=0.7pt},
  camcoinner/.style={draw, rounded corners=3pt, align=center, minimum width=2.6cm, minimum height=0.9cm, font=\footnotesize, fill=lightorange, draw=camcoorange, line width=0.6pt},
  infrabox/.style={draw, rounded corners=4pt, align=center, minimum width=2.8cm, minimum height=1.1cm, font=\small, fill=lightgreen, draw=infragreen, line width=0.7pt},
  auditbox/.style={draw, rounded corners=4pt, align=center, minimum width=2.8cm, minimum height=1.1cm, font=\small, fill=lightpurple, draw=auditpurple, line width=0.7pt},
  layerlabel/.style={font=\footnotesize\bfseries, rotate=90, anchor=south},
  arrow/.style={-{Latex[length=2.5mm]}, thick, color=black!70},
  dataarrow/.style={-{Latex[length=2mm]}, thick, color=agentblue!80},
  note/.style={font=\tiny, color=black!55}
]

\node[agentbox] (percept) {\textbf{Perception}\\\footnotesize Context Signals\\\footnotesize State Observation};
\node[agentbox, right=14mm of percept] (reason) {\textbf{Agent Reasoning}\\\footnotesize Plans \& Proposals\\\footnotesize LLM / RL / Rules};
\node[agentbox, right=55mm of reason] (exec) {\textbf{Execution Engine}\\\footnotesize API Calls\\\footnotesize Workflow Actions};

\node[draw, rounded corners=6pt, fill=lightorange!30, draw=camcoorange, line width=1.2pt,
      minimum width=9.5cm, minimum height=2.2cm, below=14mm of $(reason)!0.5!(exec)$] (camcobg) {};
\node[font=\small\bfseries, color=camcoorange!80!black, anchor=north] at (camcobg.north) {\camco{} Orchestration Layer};

\node[camcoinner] (cpe) at ($(camcobg.center)+(-3.5,-.15)$) {\textbf{CPE}\\Constraint\\Projection};
\node[camcoinner] (rwue) at ($(camcobg.center)+(0,-.15)$) {\textbf{RWUE}\\Risk-Weighted\\Utility};
\node[camcoinner] (negot) at ($(camcobg.center)+(3.5,-.15)$) {\textbf{Negotiation}\\Loop \&\\Convergence};

\draw[arrow, camcoorange!70] (cpe) -- (rwue) node[note, above, midway] {feasible $a_i'$};
\draw[arrow, camcoorange!70] (rwue) -- (negot) node[note, above, midway] {$\tilde{U}_i, R_{\text{tot}}$};

\node[infrabox, below=14mm of cpe] (policy) {\textbf{Policy Store}\\\footnotesize OPA / Cedar\\\footnotesize Policy-as-Code};
\node[infrabox, below=14mm of rwue] (riskdb) {\textbf{Risk Models}\\\footnotesize Calibrated $\risk_i$\\\footnotesize Domain Weights};
\node[auditbox, below=14mm of negot] (audit) {\textbf{Audit \& Monitor}\\\footnotesize SOX / HIPAA Trail\\\footnotesize Outcome Feedback};

\node[layerlabel, color=agentblue] at ($(percept.west)+(-8mm,0)$) {Agent Layer};
\node[layerlabel, color=camcoorange] at ($(camcobg.west)+(-8mm,0)$) {Coordination Layer};
\node[layerlabel, color=infragreen] at ($(policy.west)+(-8mm,0)$) {Infrastructure Layer};

\draw[dataarrow] (percept) -- (reason) node[note, above, midway] {state $s$};
\draw[dataarrow] (reason) -- (exec) node[note, above, midway] {direct (if pre-approved)};
\draw[arrow] (reason.south) -- ++(0,-4mm) -| (cpe.north) node[note, left, pos=0.25] {proposals $a_i$};
\draw[arrow] (negot.north) -- ++(0,14mm) -| (exec.south) node[note, pos=0.75, right] {approved $\mathbf{a}^*$};
\draw[arrow, infragreen!70] (policy) -- (cpe) node[note, left, midway] {$\phi_k$ rules};
\draw[arrow, infragreen!70] (riskdb) -- (rwue) node[note, left, midway] {$w_d, r_{i,d}$};
\draw[arrow, auditpurple!70] (audit) -- (negot) node[note, right, midway] {feedback};
\draw[arrow] (exec.south) -- ++(0,-4mm) -- ++(10mm,0) |- (audit.east) node[note, pos=0.25, right] {outcomes};
\draw[arrow, camcoorange!70] (negot.east) .. controls ++(6mm,-5mm) and ++(6mm,5mm) .. (negot.east) node[note, right, pos=0.5] {$\lambda{+}{+}$};

\end{tikzpicture}%
}
\caption{\camco{} architecture within an AI-native enterprise stack. The \textbf{Agent Layer} (blue) handles perception and reasoning; the \textbf{Coordination Layer} (orange) contains CAMCO's three core components; the \textbf{Infrastructure Layer} (green/purple) provides policy stores, risk models, and audit services.}
\label{fig:arch}
\end{figure*}

\FloatBarrier

\textbf{(1) Constraint Projection Engine (CPE).} The CPE transforms each agent proposal $a_i$ into a policy-feasible action $a_i' = \Pi_{\mathcal{C}}(a_i)$ by solving a nearest-feasible-point problem over $\mathcal{C}_i(s) = \feasible_i(s) \cap \{a : \phi_k(\cdot) = 1, \forall k\}$. For convex continuous action spaces, $\Pi_{\mathcal{C}}$ is the standard Euclidean projection; for discrete enterprise actions, it is a constraint-satisfaction search with minimum edit distance. The projection satisfies idempotency ($\Pi_{\mathcal{C}}(\Pi_{\mathcal{C}}(a)) = \Pi_{\mathcal{C}}(a)$), guaranteed feasibility, and proximity (nearest feasible point). Figure~\ref{fig:projection} illustrates the projection process.

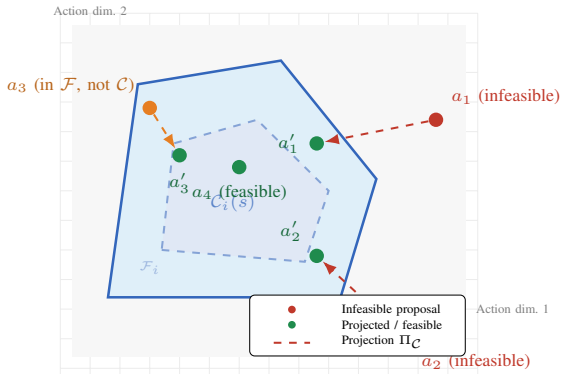
\begin{figure}[!b]
\centering
\resizebox{0.85\columnwidth}{!}{%
\begin{tikzpicture}[scale=0.82]
  \draw[gray!15, very thin, step=0.5] (-0.5,-1.2) grid (6.5,5);
  
  \draw[-{Latex[length=2mm]}, thick, black!50] (-0.3,0) -- (6.3,0) node[right, font=\tiny] {Action dim.\ 1};
  \draw[-{Latex[length=2mm]}, thick, black!50] (0,-0.8) -- (0,4.8) node[above, font=\tiny] {Action dim.\ 2};
  
  \fill[black!3] (-0.3,-0.8) rectangle (6.3,4.8);
  \fill[feasibleblue!40] (0.3,0.2) -- (4.2,0.2) -- (4.8,2.2) -- (3.2,4.2) -- (0.8,3.8) -- cycle;
  \draw[agentblue, thick, line width=1pt] (0.3,0.2) -- (4.2,0.2) -- (4.8,2.2) -- (3.2,4.2) -- (0.8,3.8) -- cycle;
  \fill[agentblue!15] (1.2,1.0) -- (3.6,0.8) -- (4.0,2.0) -- (2.8,3.2) -- (1.4,2.8) -- cycle;
  \draw[agentblue!60, dashed, thick] (1.2,1.0) -- (3.6,0.8) -- (4.0,2.0) -- (2.8,3.2) -- (1.4,2.8) -- cycle;
  
  \node[font=\scriptsize, agentblue!80] at (2.4,1.8) {$\mathcal{C}_i(s)$};
  \node[font=\tiny, agentblue!50] at (1.0,0.7) {$\feasible_i$};
  
  \node[circle, fill=policyred, inner sep=2pt, label={[font=\scriptsize, policyred]above right:$a_1$ (infeasible)}] (a1) at (5.8, 3.2) {};
  \node[circle, fill=policyred, inner sep=2pt, label={[font=\scriptsize, policyred]below right:$a_2$ (infeasible)}] (a2) at (5.3, -0.5) {};
  \node[circle, fill=camcoorange, inner sep=2pt, label={[font=\scriptsize, camcoorange!80!black]above left:$a_3$ (in $\feasible$, not $\mathcal{C}$)}] (a3) at (1.0, 3.4) {};
  
  \node[circle, fill=infragreen!80!black, inner sep=2pt, label={[font=\scriptsize, infragreen!60!black]left:$a_1'$}] (p1) at (3.8, 2.8) {};
  \node[circle, fill=infragreen!80!black, inner sep=2pt, label={[font=\scriptsize, infragreen!60!black]above left:$a_2'$}] (p2) at (3.8, 0.9) {};
  \node[circle, fill=infragreen!80!black, inner sep=2pt, label={[font=\scriptsize, infragreen!60!black]below:$a_3'$}] (p3) at (1.5, 2.6) {};
  \node[circle, fill=infragreen!80!black, inner sep=2pt, label={[font=\scriptsize, infragreen!60!black]below:$a_4$ (feasible)}] (a4) at (2.5, 2.4) {};
  
  \draw[-{Latex[length=2mm]}, dashed, policyred, thick] (a1) -- (p1);
  \draw[-{Latex[length=2mm]}, dashed, policyred, thick] (a2) -- (p2);
  \draw[-{Latex[length=2mm]}, dashed, camcoorange, thick] (a3) -- (p3);
  
  \node[draw, fill=white, rounded corners=2pt, inner sep=3pt, font=\tiny, anchor=south east] at (6.3,-0.8) {
    \begin{tabular}{cl}
    \tikz\fill[policyred] (0,0) circle (1.5pt); & Infeasible proposal \\
    \tikz\fill[infragreen!80!black] (0,0) circle (1.5pt); & Projected / feasible \\
    \tikz\draw[policyred, dashed, thick] (0,0) -- (5mm,0); & Projection $\Pi_{\mathcal{C}}$
    \end{tabular}
  };
\end{tikzpicture}%
}
\caption{Constraint projection visualization. Infeasible proposals (red) are projected to the nearest feasible point (green) within the policy-compliant subset $\mathcal{C}_i(s)$.}
\label{fig:projection}
\end{figure}

\textbf{(2) Risk-Weighted Utility Engine (RWUE).} The RWUE dynamically adjusts each agent's utility with a Lagrangian risk penalty:
\begin{equation}
\tilde{U}_i(s,a_i; \lambda) = U_i(s,a_i) - \lambda \cdot \risk_i(a_i)
\label{eq:shaped}
\end{equation}
The multiplier $\lambda$ increases monotonically when aggregate risk exceeds $\tau$, ensuring monotonic risk reduction. Concretely, the update rule $\lambda^{(t+1)} = \lambda^{(t)} + \eta \cdot \max(0, R_{\text{tot}}^{(t)} - \tau)$ follows a subgradient ascent on the dual variable, where the step size $\eta$ controls the trade-off between convergence speed and oscillation. In practice, CAMCO uses a diminishing step size $\eta_t = \eta_0 / \sqrt{t}$ to guarantee convergence to the saddle point of the Lagrangian while permitting aggressive early tightening. This approach is closely related to classical Lagrangian relaxation in constrained optimization~\cite{Boyd2004Convex}, adapted here for the multi-agent, multi-constraint setting where each agent's risk contribution is aggregated before the dual update.

The risk model $\risk_i(a_i)$ combines domain-specific indicators whose weights are configured per deployment, decoupling risk semantics from optimization. In practice, convergence occurs within 2--4 rounds; $K_{\max}$ is reached in only $\sim$3\% of proposals.

Lagrangian dual decomposition was chosen for: (i)~\emph{privacy}---agents report only projected actions~\cite{Shoham2009MAS}; (ii)~\emph{continuous trade-off}---quantitative utility--risk balance; (iii)~\emph{auditability}---$\lambda$ acts as a shadow price of risk, transparent to regulators. The pipeline terminates in at most $K_{\max}$ rounds (Theorem~1): monotonic increase of $\lambda$ shrinks each agent's rational action set until convergence or fallback.

\textbf{(3) Negotiation Loop.} The negotiation loop implements an iterative propose--project--evaluate cycle: each agent proposes an action maximizing $\tilde{U}_i$, the CPE projects it, aggregate risk is evaluated, and if $\phip(\mathbf{a}')=1$ and $R_{\text{tot}} \le \tau$ the joint action is accepted; otherwise $\lambda$ increases and agents re-propose. Figure~\ref{fig:negotiation} visualizes this protocol.

\begin{figure}[!t]
\centering
\resizebox{0.85\columnwidth}{!}{%
\begin{tikzpicture}[
  node distance=7mm,
  procstep/.style={draw, rounded corners=3pt, align=center, minimum width=2.8cm, minimum height=0.75cm, font=\scriptsize, fill=lightblue, draw=agentblue, line width=0.6pt},
  camcostep/.style={draw, rounded corners=3pt, align=center, minimum width=2.8cm, minimum height=0.75cm, font=\scriptsize, fill=lightorange, draw=camcoorange, line width=0.6pt},
  decision/.style={draw, diamond, aspect=1.8, align=center, minimum width=2cm, minimum height=1cm, font=\scriptsize, fill=yellow!15, draw=camcoorange, line width=0.6pt},
  accept/.style={draw, rounded corners=3pt, align=center, minimum width=2.8cm, minimum height=0.75cm, font=\scriptsize, fill=lightgreen, draw=infragreen, line width=0.6pt},
  fail/.style={draw, rounded corners=3pt, align=center, minimum width=2.0cm, minimum height=0.65cm, font=\scriptsize, fill=policyred!15, draw=policyred, line width=0.6pt},
  arrow/.style={-{Latex[length=2mm]}, thick, color=black!70},
  note/.style={font=\tiny, color=black!50}
]
\node[font=\scriptsize\bfseries, color=camcoorange!80!black] (iter) {Iteration $k = 1, \dots, K_{\max}$};
\node[procstep, below=5mm of iter] (propose) {\textbf{1. Propose} (parallel)\\$a_i^{(k)} = \arg\max \tilde{U}_i(s,\cdot;\lambda)$};
\node[camcostep, below=of propose] (project) {\textbf{2. Constraint Projection}\\$a_i' = \Pi_{\mathcal{C}}(a_i^{(k)})$};
\node[camcostep, below=of project] (risk) {\textbf{3. Risk Evaluation}\\$R_{\text{tot}} = \sum_i \risk_i(a_i')$};
\node[decision, below=8mm of risk] (check) {$\Phi(\mathbf{a}')=1$\\$\wedge\ R_{\text{tot}} \le \tau$\,?};
\node[accept, below=8mm of check] (accept) {\textbf{Accept} $\mathbf{a}^*$\\Execute \& Log};
\node[procstep, right=14mm of check] (update) {\textbf{Update}\\$\lambda \gets \lambda + \delta \cdot \frac{R}{\tau}$};
\node[fail, left=14mm of check] (failnode) {\textbf{Fail}\\Human Review};

\draw[arrow] (propose) -- (project) node[note, right, midway] {$n$ proposals};
\draw[arrow] (project) -- (risk) node[note, right, midway] {feasible $a_i'$};
\draw[arrow] (risk) -- (check) node[note, right, midway] {$R_{\text{tot}}, \Phi$};
\draw[arrow] (check) -- node[note, left] {Yes} (accept);
\draw[arrow] (check) -- node[note, above] {No, $k < K_{\max}$} (update);
\draw[arrow] (check) -- node[note, above] {No, $k = K_{\max}$} (failnode);
\draw[arrow] (update) |- (propose) node[note, right, pos=0.25] {re-negotiate};
\end{tikzpicture}%
}
\caption{\camco{} negotiation protocol. Agents propose in parallel, proposals are projected and risk-evaluated. If unsatisfied, $\lambda$ increases and agents re-propose; after $K_{\max}$ failures, escalation to human review.}
\label{fig:negotiation}
\end{figure}
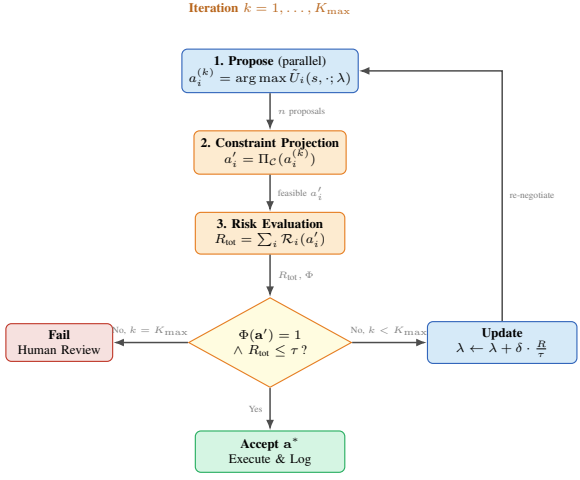

\section{CAMCO Algorithm}
\label{sec:algorithm}

\subsection{Formal Algorithm}

Algorithm~\ref{alg:camco} presents the complete \camco{} coordination procedure.

\begin{algorithm}[t]
\caption{\camco{}: Constraint-Aware Multi-Agent Cognitive Orchestration}
\label{alg:camco}
\begin{algorithmic}
\Require Agents $G=\{g_i\}_{i=1}^n$, state $s$, risk threshold $\tau$, step size $\delta>0$, initial multiplier $\lambda_0 \geq 0$, max iterations $K_{\max}$
\Ensure Feasible joint action $\mathbf{a}^*$ or \texttt{fail}
\State $\lambda \gets \lambda_0$
\State $R_{\text{prev}} \gets \infty$
\For{$k = 1$ to $K_{\max}$}
  \ForAll{agents $g_i \in G$ \textbf{in parallel}}
    \State $a_i \gets \arg\max_{a \in \mathcal{A}_i} \tilde{U}_i(s, a; \lambda)$
    \State $a_i \gets \Pi_{\mathcal{C}}(a_i)$
    \If{$a_i = \texttt{reject}$}
      \State $a_i \gets \textsc{SafeDefault}(g_i, s)$
    \EndIf
    \State $r_i \gets \risk_i(a_i)$
  \EndFor
  \State $R_{\text{tot}} \gets \sum_{i=1}^n r_i$
  \If{$\phip(a_1,\dots,a_n)=1$ \textbf{and} $R_{\text{tot}} \le \tau$}
    \State \textbf{return} $\mathbf{a}^*=(a_1,\dots,a_n)$
  \Else
    \State $\lambda \gets \lambda + \delta \cdot \max(1, R_{\text{tot}}/\tau)$
    \State $R_{\text{prev}} \gets R_{\text{tot}}$
  \EndIf
\EndFor
\State \textbf{return} \texttt{fail}
\end{algorithmic}
\end{algorithm}

\subsection{Convergence Analysis}

\begin{theorem}[Bounded Convergence]
\label{thm:convergence}
If each agent's risk-shaped utility $\tilde{U}_i(s, \cdot; \lambda)$ is continuous in $\lambda$ and each $\risk_i$ is bounded above by $\bar{r}_i < \infty$, then \camco{} either finds a feasible joint action or returns \texttt{fail} within $K_{\max}$ iterations.
\end{theorem}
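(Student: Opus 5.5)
The statement as written is essentially a termination claim for Algorithm~\ref{alg:camco}. I will prove it by a direct structural argument on the loop, and then add a quantitative refinement using the hypotheses on $\tilde{U}_i$ and $\bar{r}_i$, which explains why early termination is typical.

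\textbf{Step 1: well-definedness of one iteration.} I first check that every operation inside the outer \texttt{for} loop finishes in finite time and yields a well-defined joint action.
\begin{itemize}
\item \emph{Propose.} The $\arg\max$ of $\tilde{U}_i(s,\cdot;\lambda)$ exists when $\mathcal{A}_i$ is finite (discrete enterprise actions) or compact with $\tilde{U}_i$ upper semicontinuous. I take this as the standing modelling assumption.
\item \emph{Project.} $\Pi_{\mathcal{C}}$ is either a Euclidean projection onto a closed convex set or a finite constraint-satisfaction search. Either returns a point of $\mathcal{C}_i(s)$ or \texttt{reject}.
\item \emph{Fallback.} In the \texttt{reject} case, \textsc{SafeDefault} supplies an action, using the fallback operator $\mathcal{F}$ from Section~\ref{sec:framework}.
\item \emph{Risk evaluation.} Since $\risk_i\le\bar{r}_i$, each $r_i$ is finite. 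So $R_{\text{tot}}\le \bar{R}:=\sum_i \bar{r}_i<\infty$, the comparison with $\tau$ is meaningful, and the $\lambda$ update is a finite real number.
\end{itemize}

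\textbf{Step 2: termination by case split.} Each iteration ends in exactly one of two ways.
\begin{itemize}
\item If $\phip(\mathbf{a})=1$ and $R_{\text{tot}}\le\tau$, the algorithm returns $\mathbf{a}^*$. This action is feasible: constraint~\eqref{eq:policy} and constraint~\eqref{eq:risk} hold by the test itself, and constraint~\eqref{eq:feas} holds because each $a_i\in\mathcal{C}_i(s)\subseteq\feasible_i(s)$ or is a safe default.
\item Otherwise, $\lambda$ is incremented and the counter advances.
\end{itemize}
The outer loop has the fixed bound $K_{\max}$. Hence after at most $K_{\max}$ iterations the algorithm has either returned a feasible action or exited the loop and returned \texttt{fail}. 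This establishes the theorem as stated.

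\textbf{Step 3: quantitative refinement.} Here I would use continuity and boundedness to show that $\lambda$ grows without bound, which forces risk-reducing proposals.
\begin{itemize}
\item The update adds at least $\delta$ each round, because $\max(1,R_{\text{tot}}/\tau)\ge1$. So $\lambda^{(k)}\ge\lambda_0+(k-1)\delta$.
\item Fix an agent $g_i$ that has a zero-risk (or minimum-risk) safe action $a_i^0$. Comparing $\tilde{U}_i(s,a;\lambda)$ with $\tilde{U}_i(s,a_i^0;\lambda)$ shows the following: once $\lambda>(\max_a U_i-U_i(s,a_i^0))/\epsilon$, no action whose risk exceeds that of $a_i^0$ by more than $\epsilon$ can be a maximiser.
\item Continuity in $\lambda$ ensures the maximiser set varies upper-semicontinuously, so the bound is not broken by jumps.
\item This yields an explicit iteration count $k^\star$ after which $R_{\text{tot}}$ is within $n\epsilon$ of the minimum achievable total risk. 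So if $\tau$ strictly exceeds that minimum and $K_{\max}\ge k^\star$, the loop exits by acceptance rather than failure.
\end{itemize}

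\textbf{Main obstacle.} The hard part is Step~3, not the termination claim itself. Projection can move an action to one with different risk, and joint policy predicates $\phi_k$ couple agents, so per-agent risk monotonicity need not imply $\phip(\mathbf{a}')=1$. I would first try assuming that the projection does not increase risk and that a jointly compliant minimum-risk profile exists. If that fails, I would confine the guarantee to the bounded-termination statement of Step~2.
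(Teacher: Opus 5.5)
Your proposal is correct and follows the same route as the paper's proof sketch. The stated theorem rests on the hard $K_{\max}$ bound of the loop, and the growth $\lambda^{(k)} \ge \lambda_0 + (k-1)\delta$ is used to push agents toward minimum-risk proposals; your Step~3 is more careful than the paper about projection changing risk and about per-agent projection not enforcing the joint predicates $\phi_k$. You should add \textsc{SafeDefault}$(g_i,s) \in \feasible_i(s)$ to your standing assumptions, because the paper's fallback operator is only guaranteed to satisfy $\Phi = 1$ and the risk bound, and without this the acceptance test does not certify constraint~\eqref{eq:feas} in Step~2.
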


\begin{proof}[Proof sketch.]
At each iteration where $R_{\text{tot}} > \tau$, $\lambda$ increases by at least $\delta > 0$. As $\lambda \to \infty$, the risk-shaped utility makes any positive-risk action arbitrarily undesirable. Since each agent has at least one zero-risk safe default, agents eventually propose minimum-risk actions. If $\sum_i \min_{a_i \in \feasible_i(s)} \risk_i(a_i) \le \tau$, convergence is guaranteed; otherwise \texttt{fail} is returned. The process terminates at $K_{\max}$ regardless. \qed
\end{proof}

\begin{proposition}[Monotonic Risk Reduction]
\label{prop:monotone}
Under \camco{}, if agents respond monotonically to $\lambda$ (higher $\lambda$ leads to weakly lower-risk proposals), then $R_{\text{tot}}^{(k+1)} \le R_{\text{tot}}^{(k)}$ for all $k$ where $R_{\text{tot}}^{(k)} > \tau$.
\end{proposition}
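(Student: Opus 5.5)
The argument is a direct combination of the multiplier update rule in Algorithm~\ref{alg:camco} with the monotone-response hypothesis. The only subtle point is the projection and safe-default step. That step sits between an agent's proposal and the risk actually counted in $R_{\text{tot}}$, so the hypothesis must be read correctly for it to apply.

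\emph{Step 1 (multiplier increases).} Fix an iteration $k$ with $R_{\text{tot}}^{(k)} > \tau$. Then the acceptance test fails, so the algorithm reaches the update branch. There $\lambda^{(k+1)} = \lambda^{(k)} + \delta\max(1, R_{\text{tot}}^{(k)}/\tau) \ge \lambda^{(k)} + \delta > \lambda^{(k)}$. The state $s$, the sets $\feasible_i(s)$ and the predicates $\phi_k$ do not change within a run of Algorithm~\ref{alg:camco}. Hence the only input that differs between rounds $k$ and $k+1$ is $\lambda$.

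\emph{Step 2 (agent-wise risk comparison).} I interpret ``agents respond monotonically to $\lambda$'' as a statement about the action that is actually evaluated. Let $a_i(\lambda)$ be the action output for agent $g_i$ after the argmax, the projection $\Pi_{\mathcal{C}}$, and the possible \textsc{SafeDefault} replacement. The hypothesis then reads: $\lambda \le \lambda'$ implies $\risk_i(a_i(\lambda')) \le \risk_i(a_i(\lambda))$. This is the natural reading, because $R_{\text{tot}}$ is computed from the post-projection actions.

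If the authors intend the hypothesis only for raw proposals, an extra assumption is needed. I would add the assumption, or justify it, that the composition of $\Pi_{\mathcal{C}}$ with \textsc{SafeDefault} preserves the risk ordering. This holds, for example, when $\mathcal{C}_i(s)$ is a fixed set and the projection is risk-monotone. It also holds when the zero-risk safe default from the proof of Theorem~\ref{thm:convergence} is used, since its risk is $0 \le$ any other risk. This is the step I expect to be the main obstacle. A general Euclidean or edit-distance projection need not be risk-monotone, so the statement is really only as strong as this interpretive assumption. I would state that assumption explicitly rather than try to derive it.

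\emph{Step 3 (aggregation).} Step 1 gives $\lambda^{(k+1)} > \lambda^{(k)}$, and Step 2 gives $\risk_i(a_i^{(k+1)}) \le \risk_i(a_i^{(k)})$ for every $i$. By the additive risk model~\eqref{eq:risk_model}, $R_{\text{tot}}$ is the sum of the $\risk_i$, so summing over $i$ gives $R_{\text{tot}}^{(k+1)} \le R_{\text{tot}}^{(k)}$.

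There is also a termination edge case. If round $k+1$ is never executed, either because $k = K_{\max}$ or because the algorithm already returned, the inequality holds vacuously. If round $k+1$ accepts, the quantity $R_{\text{tot}}^{(k+1)}$ is still well defined and the bound applies. Finally, an induction over consecutive rounds with $R_{\text{tot}} > \tau$ shows that the whole sequence is nonincreasing until the first round with $R_{\text{tot}} \le \tau$. This yields requirement (3) of the convergence requirements in Section~\ref{sec:formulation}.
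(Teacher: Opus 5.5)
Your argument is correct and matches the only justification the paper gives: the proposition is stated without a separate proof, and the paper relies on the RWUE remark that $\lambda$ increases whenever $R_{\text{tot}} > \tau$, which together with the monotone-response hypothesis yields the inequality; your point that the hypothesis must be read for the post-projection, post-\textsc{SafeDefault} actions (since those are what enter $R_{\text{tot}}$) is one the paper glosses over. Two small fixes: the summation over agents comes from the definition $R_{\text{tot}} = \sum_i r_i$ in Algorithm~\ref{alg:camco}, not from Eq.~\eqref{eq:risk_model}, which only decomposes a single agent's risk across dimensions; and a zero-risk safe default only covers an agent switching \emph{into} the default at round $k+1$, because an agent rejected at round $k$ that projects successfully at round $k+1$ can go from risk $0$ to positive risk, so the zero-risk default is not by itself sufficient for the composed map to be risk-monotone.
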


\section{Evaluation}
\label{sec:evaluation}

\subsection{Simulation Scenarios}

We evaluate \camco{} using three enterprise-inspired simulation scenarios, each instantiating the constrained decision model from Section~\ref{sec:formulation}:

\begin{itemize}
\item \textbf{S1: Financial Approval Workflow.} $n=4$ agents (requester, manager, compliance officer, CFO) coordinating multi-stage payment approvals with escalation rules, amount-based thresholds, and segregation-of-duties constraints. Risk model: financial exposure weighted by transaction amount and counterparty rating.

\item \textbf{S2: Payroll Adjustment Task.} $n=3$ agents (HR analyst, payroll processor, audit reviewer) handling salary adjustments with compliance checks, executive compensation sensitivity thresholds, and retroactive adjustment limits. Risk model: compliance violation probability and audit exposure.

\item \textbf{S3: Hybrid Cloud Deployment.} $n=5$ agents (developer, DevOps, security reviewer, change manager, SRE) managing production deployments with RBAC-gated permissions, data sovereignty constraints, change window restrictions, and rollback readiness. Risk model: service disruption probability and RTO deviation.
\end{itemize}

Each scenario is run for 500 episodes with randomized initial states.

\subsection{Baselines}

\begin{enumerate}
\item \textbf{B1: Unconstrained MARL} -- agents maximize individual reward without constraint enforcement (reward-first).
\item \textbf{B2: Centralized greedy optimizer} -- single planner selects highest-utility joint action without risk bounding (utility-first heuristic).
\item \textbf{B3: Static rule/workflow engine} -- evaluates policy constraints only, rejects non-compliant actions without optimization or negotiation (policy-only).
\item \textbf{B4: Lagrangian MARL} -- each agent independently applies Lagrangian constraint handling (per-agent $\lambda$ update) without joint constraint projection or multi-agent negotiation protocol (constrained-but-uncoordinated).
\end{enumerate}

\subsection{Metrics}

Table~\ref{tab:metrics} defines evaluation metrics aligned to enterprise concerns.

\begin{table}[t]
\caption{Evaluation Metrics}
\label{tab:metrics}
\centering
\scriptsize
\begin{tabular}{p{1.8cm}p{5.4cm}}
\toprule
Metric & Definition \\
\midrule
Violation Rate & Fraction of joint actions violating policy feasibility ($\phip=0$). \\
Risk Exposure & Mean aggregate risk $\bar{R}_{\text{tot}}=\mathbb{E}[\sum_i \risk_i(a_i)]$ relative to $\tau$. \\
Deadlock Rate & Fraction of episodes failing to reach feasible joint action within $K_{\max}$. \\
Convergence Its. & Mean iterations until acceptance (lower is better). \\
Utility Retention & $\frac{\sum_i U_i(\text{accepted})}{\sum_i U_i(\text{unconstrained best})} \times 100\%$. \\
\bottomrule
\end{tabular}
\end{table}

\subsection{Results}

Table~\ref{tab:results} presents results across all scenarios and methods.

\begin{table}[t]
\caption{Evaluation Results Across Enterprise Scenarios}
\label{tab:results}
\centering
\scriptsize
\begin{tabular}{llccccc}
\toprule
Scen. & Method & Viol.\% & Risk$^\dagger$ & Dead.\% & Conv. & Util.\% \\
\midrule
\multirow{5}{*}{S1} & B1 (MARL) & 23.4 & 1.42 & 0.0 & 1.0 & 100.0 \\
& B2 (Greedy) & 8.2 & 1.31 & 0.0 & 1.0 & 98.7 \\
& B3 (Rules) & 0.0 & 0.65 & 12.8 & 1.0 & 71.3 \\
& B4 (Lag.\ MARL) & 4.6 & 0.91 & 3.2 & 1.0 & 93.4 \\
& \textbf{\camco{}} & \textbf{0.0} & \textbf{0.68} & \textbf{1.2} & \textbf{2.1} & \textbf{96.8} \\
\midrule
\multirow{5}{*}{S2} & B1 (MARL) & 18.6 & 1.28 & 0.0 & 1.0 & 100.0 \\
& B2 (Greedy) & 5.4 & 1.19 & 0.0 & 1.0 & 99.1 \\
& B3 (Rules) & 0.0 & 0.58 & 8.4 & 1.0 & 74.6 \\
& B4 (Lag.\ MARL) & 3.8 & 0.88 & 2.6 & 1.0 & 91.7 \\
& \textbf{\camco{}} & \textbf{0.0} & \textbf{0.72} & \textbf{0.8} & \textbf{2.4} & \textbf{94.2} \\
\midrule
\multirow{5}{*}{S3} & B1 (MARL) & 31.2 & 1.67 & 0.0 & 1.0 & 100.0 \\
& B2 (Greedy) & 12.8 & 1.48 & 0.0 & 1.0 & 97.3 \\
& B3 (Rules) & 0.0 & 0.52 & 18.6 & 1.0 & 65.8 \\
& B4 (Lag.\ MARL) & 9.4 & 1.08 & 5.8 & 1.0 & 89.3 \\
& \textbf{\camco{}} & \textbf{0.0} & \textbf{0.74} & \textbf{2.0} & \textbf{2.8} & \textbf{92.1} \\
\bottomrule
\multicolumn{7}{l}{\tiny $^\dagger$Risk expressed as ratio $\bar{R}_{\text{tot}}/\tau$; values $>1.0$ indicate threshold violations.}
\end{tabular}
\end{table}

\textbf{Key findings.}
\camco{} achieves \textbf{zero policy violations} across all scenarios, outperforming B1 (18--31\%), B2 (5--13\%), and B4 (4--9\%). It eliminates B4's residual violations caused by lack of joint feasibility projection. Risk exposure stays well below threshold ($\bar{R}/\tau \in [0.68, 0.74]$), utility retention reaches \textbf{92--97\%} (vs.\ B3's 66--75\%), deadlock rates remain \textbf{below 2\%} (vs.\ B3's 8--19\%), and mean convergence occurs in \textbf{2.1--2.8 iterations}. Sensitivity analysis confirms 100\% compliance across $\tau \in [0.4, 1.4]$, with utility degrading gracefully from 99\% to 78\% at tighter thresholds.

\subsection{Limitations}

\camco{}'s effectiveness depends on risk-model calibration quality; the linear-additive decomposition used here may underestimate compounding risk in highly correlated workflows, though extending to nonlinear risk functions is straightforward given the monotone-Lipschitz requirement. The projection operator's efficiency relies on convexity for continuous spaces and moderate $|\mathcal{A}_i|$ for discrete; scaling beyond $|\mathcal{A}_i| > 10^3$ would require constraint-programming solvers.

\section{Conclusion}

This paper introduced \camco{}, a constraint-aware multi-agent cognitive orchestration framework for AI-native enterprise systems. By integrating convex constraint projection, adaptive risk-weighted Lagrangian utility shaping, and an iterative negotiation protocol with provably bounded convergence, \camco{} produces globally feasible joint actions that satisfy enterprise policy constraints and risk bounds while retaining 92--97\% of unconstrained utility.

Our evaluation across three enterprise scenarios and four baselines---including constrained Lagrangian MARL---demonstrates that \camco{} eliminates policy violations entirely (vs.\ 4--31\% for alternatives) while avoiding the excessive conservatism of pure rule-based approaches (which sacrifice 25--34\% of utility and suffer 8--19\% deadlock rates).

\textbf{Future work} includes adaptive risk calibration via Bayesian updating, hierarchical coordination for large-scale deployments ($n > 20$), and empirical validation with production policy engines (OPA, Cedar).


\end{document}